\newcommand{\mat}[1]{\bm{#1}}
\newcommand{\vect}[1]{\bm{#1}}
\newcommand{\T}{\mathsf{T}}
\newcommand{\pr}{\text{pr}}
\newtheorem{theorem}{Theorem}
\theoremstyle{definition}
\newtheorem{remark}{Remark}
\newtheorem{assumption}{Assumption}
\title{Harmless label noise and informative soft-labels in supervised classification}
\author[*]{Daniel Ahfock}
\author[ ]{Geoffrey J. McLachlan}
\affil[ ]{School of Mathematics and Physics, University of Queensland}
\affil[*]{\texttt{d.ahfock@uq.edu.au}}
\date{}                     
\begin{document}
\maketitle

\begin{abstract}
Manual labelling of training examples is common practice in supervised learning. When the labelling task is of non-trivial difficulty, the supplied labels may not be equal to the ground-truth labels, and label noise is introduced into the training dataset. If the manual annotation is carried out by multiple experts, the same training example can be given different class assignments by different experts, which is indicative of label noise. In the framework of model-based classification, a simple, but key observation is that when the manual labels are sampled using the posterior probabilities of class membership, the noisy labels are as valuable as the ground-truth labels in terms of statistical information. A relaxation of this process is a random effects model for imperfect labelling by a group that uses approximate posterior probabilities of class membership. The relative efficiency of logistic regression using the noisy labels compared to logistic regression using the ground-truth labels can then be derived. The main finding is that logistic regression can be robust to label noise when label noise and classification difficulty are positively correlated. In particular, when classification difficulty is the only source of label errors, multiple sets of noisy labels can supply more information for the estimation of a classification rule compared to the single set of ground-truth labels. 
\end{abstract}

\section{Introduction}
Many supervised learning algorithms operate on the assumption that the training set labels are the ground-truth labels. In practice, ground-truth labels may not be readily obtainable, and manual annotation is used to construct the training dataset \citep{frenay_2014_classification}. In medical applications, clinicians will often classify patients into different groups on the basis of preliminary examinations. In machine learning, it is convenient to crowdsource labels for image and speech recognition tasks through an online platform. Due to the subjective nature of the process, and the inherent difficulty of classifying some observations, the manually collected labels may not be equal to the ground-truth labels. This mismatch is often referred to as label noise, and this phenomenon can have interesting statistical implications \citep{mclachlan_1972_asymptotic, bouveyron_2019_model, cannings_2020_classification}. A primary concern is the robustness of an estimated classification rule with respect to label noise \citep{cappozzo_2019_robust, vranckx_2021_real}.  A closely related issue is that when each member of a group of experts provides a class assignment, the agreement is not necessarily unanimous. The situation when there is heterogeneity amongst the supplied labels is referred to as soft-labelling, as there is no definitive class assignment for each feature vector \citep{quost_2017_parametric}. Extracting the maximum amount of information from conflicting label sets is a challenging task in statistical machine learning \citep{dawid_1979_maximum, smyth_1995_inferring, raykar_2009_supervised, yan_2010_modeling}.

As motivating examples, Figure \ref{fig:group_label_example} shows histograms of vote counts for three different binary classification datasets that have been manually labelled. Each dataset contains $n$ entities in the training set that have each been classified by $m$ individuals in a group \citep{mesejo_2016_computer, welinder_2010_multidimensional, ipeirotis_2010_quality}. The $x$-axis represents the total number of votes for the positive class from the $m$ individuals in the labelling group.  The $y$-axis represents the number of entities that have received a certain number of positive votes. If there were unanimous group agreement on the classification of an entity, there would either be $m$ positive votes in total, or 0 positive votes in total. For the Gastroentology dataset in panel (a), there were 32 entities which were classified as positive by all $m=7$ members in the labelling group. For the other type of unanimous agreement, there were 4 entities that were classified as positive by $0$ members in the labelling group.  In each dataset there are entities where there is no strong consensus, which may reflect uncertainty over the ground-truth label.

\begin{figure}[!htbp]
    \centering
    \includegraphics[width=\textwidth]{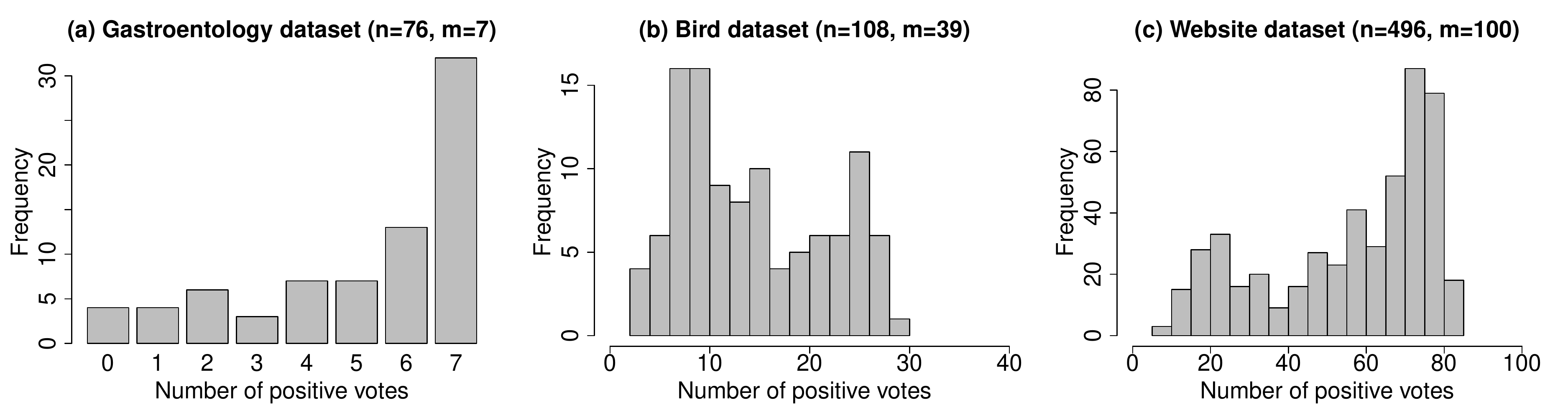}
    \caption{Vote counts for three binary classification datasets that have been manually labelled. The $x$-axis represents the total number of votes for the positive class from the $m$ individuals in the labelling group.  The $y$-axis represents the number of entities that have received a certain number of positive votes. Unanimous group agreement implies $m$ votes for the positive class or $0$ votes for the positive class.   (a) Gastroentology dataset $(n=76, m=7)$ \citep{mesejo_2016_computer}. $m=7$ clinicians classify $n=76$ colnoscopic videos into malignant and benign classes . (b) Bird dataset $(n=108, m=39)$ \citep{welinder_2010_multidimensional}. $m=39$ individuals report whether  $n=108$ images contain a bird or not. (c) Adult content dataset $(n=496, m=100)$ \citep{ipeirotis_2010_quality}. $m=100$ individuals report whether $n=496$ websites contain adult material or not.}
    \label{fig:group_label_example}
\end{figure}

In the framework of model-based classification, the ground-truth labels are commonly treated as latent variables in a finite-mixture model \citep{fraley_2002_model, mclachlan_2000_finite}.  The data generating process for a $g$-component mixture model can be represented as a hierarchical model,
\begin{align*}
    \vect{Z} &\sim \text{Multinomial}(1, \vect{\pi}), \\
    \vect{Y} \mid  \vect{Z}_{i}=1 &\sim f(\vect{y}; \vect{\omega}_{i}),
\end{align*}
where $\vect{\pi}=(\pi_{1}, \ldots, \pi_{g})^{\T}$ gives the mixing proportions and $\vect{\omega}=(\vect{\omega}_{1}^{\T}, \ldots, \vect{\omega}_{g}^{\T})^{\T}$ is a vector of component specific parameters. The latent variable $\vect{Z}$ is considered the ground-truth label as the feature vector $\vect{Y}$ is then sampled from the $i$th class conditional distribution $f(\vect{y};\vect{\omega}_{i})$ given that $\vect{Z}_{i}=1$ $(i=1, \ldots, g)$.  We propose to treat the manual label $\vect{Z}'$ as a random variable to model the manual labelling process. Label noise occurs when $\vect{Z}^{'} \ne \vect{Z}$.

We develop a probabilistic model for manual annotation where label noise is positively associated with classification difficulty. Using this model, we analyse the relative value of noisy manual labels compared to the ground-truth labels in the context of logistic regression. As a starting point, Section \ref{subsec:label_noise} highlights that when the manual labels $\vect{Z}'$ are sampled according to the posterior probabilities of class membership, the use of noisy labels can be as efficient as the use of the ground-truth labels $\vect{Z}$. In Section \ref{subsec:group} we introduce a random-effects model for group labelling to account for imperfect knowledge of the posterior class-probabilities. In Section \ref{sec:relative_efficiency} we derive the asymptotic relative efficiency of logistic regression using the noisy labels to logistic regression using the ground-truth labels. The key finding is that multiple sets of noisy labels can supply more information for the estimation of a classification rule relative to the single set of ground-truth labels. Section \ref{sec:simulation} presents simulation results regarding the asymptotic relative efficiency. In Section \ref{sec:data} we assess the proposed random effects model on the Gastroentology dataset introduced in Figure \eqref{fig:group_label_example} (a) and present the results of experiments on the Wisconsin breast cancer dataset. Finally, conclusions and directions for future work are given in Section \ref{sec:conclusion}.

\section{Manual labelling}
\label{sec:manual}
\subsection{Prior work}
A comprehensive survey of label noise in supervised learning is given in \citet{frenay_2014_classification}. Early statistical methods for manual labels treated the ground-truth label as a latent variable to be estimated on the basis of the observed noisy labels \citep{dawid_1979_maximum, smyth_1995_inferring}. More recently, there has been a focus on constructing generative models for the noisy labels in order train a classifier \citep{jin_2003_learning, bouveyron_2009_robust, yan_2010_modeling, raykar_2009_supervised}. In a related branch of work on the analysis of crowdsourced labels, methods have been developed for filtering out bad actors who deliberately mislabel instances, and ranking annotators when domain expertise varies in the group \citep{raykar_2012_eliminating, hovy_2013_learning, zhang_2013_learning}.  We will assume that the annotators are of comparable skill level and none act maliciously. This assumption will likely be violated in crowdsourcing applications with low barriers to entry,  but is more plausible when manual labels are collected from qualified experts in a medical study.

 Prior work on modelling group labelling typically makes the assumption that label noise occurs uniformly within a class \citep{dawid_1979_maximum, raykar_2009_supervised, jin_2003_learning, donmez_2010_probabilistic, song_2020_convex}.  Feature dependent label noise is considered in \citet{yan_2010_modeling} to allow for different experts in the group to specialise in separate areas of the feature space.  The behaviour of logistic regression has been studied under the assumption of fixed class-conditional label noise rates. In  \citet{michalek_1980_effect}  and \citet{bi_2010_efficiency} it is demonstrated that with fixed class-conditional noise, logistic regression is not necessarily consistent.  The increase in the expected error rate due to the noise is also derived. Under class-conditional uniform noise, \cite{song_2020_convex} show that the noisy labels can be treated as the response in a modified generalized linear model and compare the relative value of the noisy labels to the ground-truth labels. 
 
The idea that label noise is more concentrated around classification decision boundaries has been explored in the context of robust estimation  \citep{ xu_2006_robust,rebbapragada_2007_class, blanchard_2016_classification}. However, there appears to be little statistical work modelling this phenomenon and quantifying the information loss relative to the ground-truth labels. In practice, information loss, manual adulteration, and data-entry errors can introduce label noise into the dataset. In order to simplify the theoretical analysis we make the assumption that only contributing factor to label errors is classification difficulty as measured by the posterior probabilities of class membership.

\subsection{Label noise}
\label{subsec:label_noise}
Here we show that label noise does not necessarily lead to a loss in statistical information. The mixing proportions $\vect{\pi}=(\pi_{1}, \ldots, \pi_{g})^{\T}$ and component specific parameters $\vect{\omega}=(\vect{\omega}_{1}^{\T}, \ldots, \vect{\omega}_{g}^{\T})^{\T}$ can be put together into the single parameter  $\vect{\Psi}=(\vect{\pi}^{\T}, \vect{\omega}^{\T})^{\T} \in \vect{\Omega}$. We assume that the parameter space $\vect{\Omega}$ is such that the mixture model is identifiable, suitable conditions for this to hold are discussed in \cite{cheng_2001_consistency}. Let $\tau_{i}(\vect{y}; \vect{\Psi})$ represent the posterior probability of membership in the $i$th class
\begin{align}
    \tau_{i}(\vect{y}; \vect{\Psi}) &= \dfrac{\pi_{i}f(\vect{y}; \vect{\omega}_{i})}{\sum_{h=1}^{g}\pi_{h}f(\vect{y}; \vect{\omega}_{h})} \quad(i=1, \ldots, g), \label{eq:posterior}
\end{align}
and $\vect{\tau}(\vect{y}; \vect{\Psi})=( \tau_{1}(\vect{y}; \vect{\Psi}), \ldots, \tau_{g}(\vect{y}; \vect{\Psi}))^{\T}$ give the vector of $g$ posterior class-probabilities. To model the subjective nature of manual labelling, suppose the annotated label $\vect{Z}'$ is distributed as a multinomial random variable according to the posterior probabilities of class membership \eqref{eq:posterior},
\begin{align}
    \vect{Z}'\mid \vect{Y}=\vect{y} &\sim \text{Multinomial}(1, \vect{\tau}(\vect{y}; \vect{\Psi})). \label{eq:manual_model}
\end{align}
Under this labelling model, the probability of a labelling error is dependent on the posterior class-probabilities $\vect{\tau}(\vect{y}; \vect{\Psi})$. Specifically,
\begin{align*}
    \pr(\vect{Z}' \ne \vect{Z} \mid \vect{Y}=\vect{y}) &= 1- \pr(\vect{Z}'=\vect{Z} \mid \vect{Y}=\vect{y}) \\
    &= 1-\sum_{i=1}^{g}\pr(Z_{i}'=1, Z_{i}=1 \mid \vect{Y}=\vect{y})\\
    &=1-\sum_{i=1}^{g}\lbrace\tau_{i}(\vect{y}; \vect{\Psi})\rbrace^2.
\end{align*}
Broadly speaking, the greater the classification difficulty, the greater the probability of a labelling error. In the two-class problem, the probability of a labelling error approaches 0.5 near the decision boundary.  

The manual label model \eqref{eq:manual_model} is particularly interesting as the joint distribution of the manual label $\vect{Z}'$ and the feature vector $\vect{Y}$ is the same as the joint distribution of the ground-truth label $\vect{Z}$ and the feature vector $\vect{Y}$. The joint distributions $(\vect{Y}, \vect{Z}') \sim g(\vect{y}, \vect{z}'; \vect{\Psi})$ and $(\vect{Y}, \vect{Z})\sim f(\vect{y}, \vect{z}; \vect{\Psi})$ are equal as
\begin{align*}
    g(\vect{z}', \vect{y}; \vect{\Psi}) &= f(\vect{y}; \vect{\Psi})g(\vect{z}' \mid \vect{y}; \vect{\Psi}) \\
    &= f(\vect{y}; \vect{\Psi})f(\vect{z}' \mid \vect{y}; \vect{\Psi}) \\
    &=  f(\vect{z}', \vect{y}; \vect{\Psi}),
\end{align*}
where the substitution $g(\vect{z}' \mid \vect{y}; \vect{\Psi}) = f(\vect{z}' \mid \vect{y}; \vect{\Psi})$ is possible as the manual labels $\vect{Z}'$ are sampled according to the posterior probabilities of class membership \eqref{eq:posterior}. As such, the joint distribution of $(\vect{Y}, \vect{Z}')$ is the same as the joint distribution of $(\vect{Y}, \vect{Z})$. Consequently, given $n$ independently and identically distributed observations using the manual labelling model, $\lbrace (\vect{Y}_{j}, \vect{Z}_{j}^{'}) \rbrace_{j=1}^{n}$, the distribution of the maximum likelihood estimate will be the same as if using a dataset with the ground-truth labels $\lbrace (\vect{Y}_{j}, \vect{Z}_{j}) \rbrace_{j=1}^{n} $. It follows that that the expected error rate of a classifier trained using $\vect{Z}_{1}', \ldots, \vect{Z}_{n}'$ will be the same as a classifier trained using the ground-truth labels $\vect{Z}_{1}, \ldots, \vect{Z}_{n}$. This is a proof of concept that label noise can be harmless when the label noise process is governed by the posterior probabilities of class membership. 

Figure \ref{fig:manual_labelling} shows $n=500$ observations from a simulated two-class dataset of two normal distributions with equal identity covariance matrices and means $(1, 0)$ and $(-1, 0)$.  Panel (a) shows the features and the ground-truth labels $\vect{Z}_{1}, \ldots, \vect{Z}_{n}$, and panel (b) shows the features and the manual labels $\vect{Z}_{1}', \ldots, \vect{Z}_{n}'$ sampled from model \eqref{eq:manual_model}, where red triangles and blue squares represent class one and two respectively. There were labelling errors $(\vect{Z}_{j}\ne \vect{Z}_{j}')$ for $24\%$ of the observations. The errors are visibly concentrated around the decision boundary, which is indicated by the dashed vertical line. 

\begin{figure}
    \centering
    \includegraphics[width=0.7\textwidth]{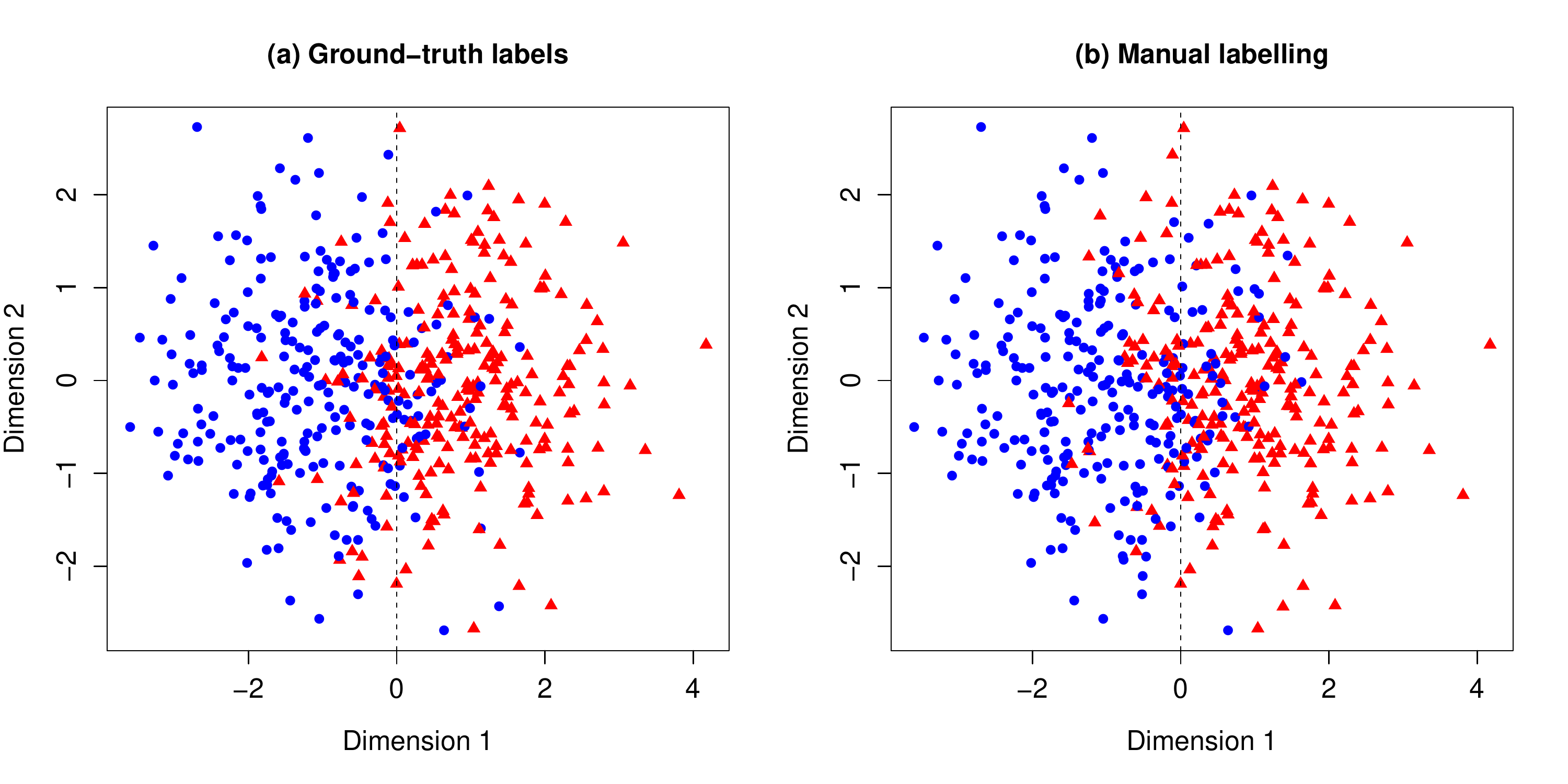}
    \caption{Simulated two-class normal dataset of $n=500$ observations. The dashed vertical line represents the decision boundary. (a) Simulated features $\vect{Y}_{1}, \ldots, \vect{Y}_{n}$ and ground-truth labels $\vect{Z}_{1}, \ldots, \vect{Z}_{n}$ (b) Simulated features $\vect{Y}_{1}, \ldots, \vect{Y}_{n}$ and noisy manual labels $\vect{Z}_{1}', \ldots, \vect{Z}_{n}'$ generated using the model \eqref{eq:manual_model}.  24\% of the manual labels in (b) are not equal to the ground-truth labels in (a). }
    \label{fig:manual_labelling}
\end{figure}
\subsection{Group labelling}
\label{subsec:group}
Suppose that there are $m$ individuals in the labelling group, and they each supply a label for each of the $n$ feature vectors in the dataset. In an idealised scenario the ground-truth labels, feature vectors and manually assigned labels are treated as $n$ independently and identically distributed observations from the model:
\begin{subequations}\label{eq:dgp}
\begin{align}
    \vect{Z}_{j} &\sim \text{Multinomial}(1,\vect{\pi}), \label{eq:dgp1}\\ 
    \vect{Y}_{j} \mid  Z_{ji}=1 &\sim f(\vect{y}_{j}; \vect{\omega}_{i}), \label{eq:dgp2} \\
    \vect{Z}_{jk}'\mid \vect{Y}_{j}=\vect{y}_{j} &\sim \text{Multinomial}(1,\vect{\tau}(\vect{y}_{j}; \vect{\Psi})) \quad (k=1, \ldots, m). \label{eq:manual_model_group}
\end{align}
\end{subequations}
The random vector $\vect{Z}_{j}=(Z_{j1}, \ldots, Z_{jg})^{\T}$ represents the ground-truth label for the feature vector $\vect{y}_{j}$. The random vector $\vect{Z}_{jk}'=(Z_{jk1}, \ldots, {Z}_{jkg})^{\T}$ represents the label assigned to the feature vector $\vect{y}_{j}$ by the $k$th individual in the group of experts. Let $S_{ji}=\sum_{k=1}^{m} \mathbbm{1}(Z_{jki}=1)$, represent the number of votes that entity $j$ belongs to class $i$ over the $m$ experts in the group, and $\vect{S}_{j}=(S_{j1}, \ldots, S_{jg})^{\T}$ give the vector of vote counts for each of the $g$ classes. It follows from the model  \eqref{eq:dgp1},\eqref{eq:dgp2},\eqref{eq:manual_model_group} that $S_{ji}\sim \text{Binomial}(m, \tau_{i}(\vect{y}_{j}; \vect{\Psi}))$ and  the vector of aggregated class counts $\vect{S}_{j}$ has the distribution
\begin{align}
    \vect{S}_{j} &\sim \text{Multinomial}(m, \vect{\tau}(\vect{y}_{j}; \vect{\Psi})). \label{eq:count_multinomial}
\end{align}
To relax the assumption about the accuracy of manual annotation, we adopt a random effects model for the class label counts $\vect{S}_{j}$. The Dirichlet-Multinomial model \citep{johnson_1997_discrete} is an extension of the Multinomial distribution that can be used to account for the fact that the group of experts does not have exact knowledge of the posterior probabilities of class membership $\vect{\tau}(\vect{y}_{j}; \vect{\Psi})$. We introduce the overdispersion parameter $\alpha_{0} \in \mathbb{R}^{+}$ to give the proposed Dirichlet-Multinomial model for the aggregated manual labels
\begin{align}
\vect{S}_{j}  &\sim \text{Dirichlet-Multinomial}(m, \alpha_{0}\vect{\tau}(\vect{y}_{j}; \vect{\Psi})).  \label{eq:count_dir_multinomial}
\end{align}
The model \eqref{eq:count_dir_multinomial} has the hierarchical representation
\begin{align*}
   \vect{p}_{j} &\sim \text{Dirichlet}(\alpha_{0}\vect{\tau}(\vect{y}_{j}; \vect{\Psi})), \\
   \vect{S}_{j}  \mid   \vect{p}_{j} &\sim \text{Multinomial}(m, \vect{p}_{j}). 
\end{align*}
The overdispersion parameter $\alpha_{0}$ controls how concentrated the draws of $\vect{p}_{j}$ are around the true posterior class-probabilities $\vect{\tau}(\vect{y}_{j}; \vect{\Psi})$. As $\alpha_{0}\to \infty$ the excess variance goes to zero, and the Dirichlet-Multinomial model \eqref{eq:count_dir_multinomial} converges to the multinomial distribution \eqref{eq:count_multinomial}. 
The Dirichlet-Multinomial model still enforces a relationship between the probability of a labelling error and classification difficulty, but relaxes the assumptions on the accuracy of the manual annotation compared to \eqref{eq:count_multinomial}. An important difference between our model and existing work is that we make an explicit connection between probability of labelling error and classification difficulty using the posterior class-probabilities.

\section{Relative efficiency}
\label{sec:relative_efficiency}
We now consider the relative value of the aggregated noisy manual labels $\vect{S}_{1}, \ldots, \vect{S}_{n}$ compared to the ground-truth labels $\vect{Z}_{1}, \ldots, \vect{Z}_{n}$ for training a discriminative classifier \citep{ng_2002_discriminative}. We will consider a binary classification problem where the posterior probabilities of class membership $\vect{\tau}(\vect{y}_{j}; \vect{\beta})$ are described by a logistic regression model with parameter $\vect{\beta}=({\beta}_{0}, \vect{\beta}^{\T})^{\T}$,
\begin{align}
    \tau_{1}(\vect{y}_{j}; \vect{\beta}) &= \pr(Z_{j1}=1 \mid \vect{y}_{j} ; \vect{\beta})  \nonumber \\
    &= \dfrac{\exp(\beta_{0}+\vect{\beta}_{1}^{\T}\vect{y}_{j})}{1+\exp(\beta_{0}+\vect{\beta}_{1}^{\T}\vect{y}_{j})}. \label{eq:lr}
\end{align}
For example, this will be the case for two-class normal discrimination with equal covariance matrices. 

 Let $\mathcal{Y}=\lbrace \vect{y}_{j} \rbrace_{j=1}^{n}$ represent the $n$ observed feature vectors, $\mathcal{Z}=\lbrace \vect{z}_{j} \rbrace_{j=1}^{n}$ represent the ground-truth labels, $\mathcal{Z}'=\lbrace \vect{z}_{jk}' \rbrace_{j=1, \ldots, n}^{k=1, \ldots, m}$ represent the manual labels, and $\mathcal{S} =\lbrace \vect{s}_{j} = \sum_{k=1}^{m}\vect{z}_{jk}' \rbrace_{j=1}^{n}$ represent the class vote counts from manual annotation. The logistic regression log-likelihood functions using the ground-truth labels $\mathcal{Z}$ and the aggregated noisy manual labels $\mathcal{S}$ are respectively,

\begin{align}
     \log L_{LR}(\vect{\beta}; \mathcal{Z} ,\mathcal{Y}) &= \sum_{j=1}^{n}z_{j1}(\beta_{0}+\vect{\beta}_{1}^{\T}\vect{y}_{j}) - \log\left\lbrace 1+\exp (\beta_{0}+\vect{\beta}_{1}^{\T}\vect{y}_{j})  \right\rbrace, \label{eq:logistic_conditional} \\
        \log L_{LR}(\vect{\beta}; \mathcal{S} , \mathcal{Y}) &= \sum_{j=1}^{n}s_{j1}(\beta_{0}+\vect{\beta}_{1}^{\T}\vect{y}_{j}) - m \log\left\lbrace 1+\exp (\beta_{0}+\vect{\beta}_{1}^{\T}\vect{y}_{j}) \right\rbrace. \label{eq:manual_conditional}
\end{align}
 We compare the use of $\log L_{LR}(\vect{\beta}; \mathcal{Z} ,\mathcal{Y})$ and  $\log L_{LR}(\vect{\beta}; \mathcal{S} , \mathcal{Y})$ for the estimation of $\vect{\beta}$. Let $\widehat{\vect{\beta}}_{G}$ and $\widehat{\vect{\beta}}_{M}$ be the estimated coefficients  from the maximisation $L_{LR}(\vect{\beta}; \mathcal{Z} ,  \mathcal{Y})$ and $L_{LR}(\vect{\beta}; \mathcal{S} , \mathcal{Y})$ respectively, and $\widehat{R}_{G}$ and $\widehat{R}_{M}$ be the corresponding estimated classification rules. The goal is to determine the asymptotic relative efficiency
\begin{align*}
     \rm{ARE} &= \underset{n \to \infty}{\lim} \ \dfrac{  \mathbb{E}\lbrace{\rm{err}}(\widehat{\vect{\beta}}_{G}; \vect{\beta}) \rbrace -\rm{err}(\vect{\beta}; \vect{\beta})}{ \mathbb{E}\lbrace{\rm{err}}(\widehat{\vect{\beta}}_{M}; \vect{\beta}) \rbrace -\rm{err}(\vect{\beta}; \vect{\beta})},
\end{align*}
where $\text{err}(\vect{\beta}^{*}; \vect{\beta})$ denotes the conditional error rate when using the estimate $\vect{\beta}^*$ when the true parameter is $\vect{\beta}$. The key assumptions for the analysis are summarised below. 

\begin{assumption}
\label{assump:lr}
The observations $(\vect{Y}_{j}^{\T}, \vect{Z}_{j}^{\T}, \vect{S}_{j}^{\T})$ are independently and identically distributed for $j=1, \ldots, n$, and  $\text{cov}(\vect{Y}_{j})=\mat{\Sigma}$ for some symmetric positive definite matrix $\mat{\Sigma}$. Furthermore, the conditional distribution of $\vect{Z}_{j}$ given $\vect{Y}_{j}$ is given by the logistic regression model \eqref{eq:lr}, and the aggregated manual labels $\vect{S}_{j}$ are distributed according to the Dirichlet-Multinomial model \eqref{eq:count_dir_multinomial}. 
\end{assumption}

Using the ground-truth labels,  $\sqrt{n}(\widehat{\vect{\beta}}_{G}-\vect{\beta}) \to N(\vect{0}, \mat{I}_{LR}^{-1})$, where $\mat{I}_{LR}$ is the Fisher information in the logistic regression \citep{efron_1975_efficiency}. Theorem \ref{thm:beta_distribution} gives the asymptotic distribution of the estimator using the noisy manual labels $\widehat{\vect{\beta}}_{M}$.
 
\begin{theorem}
\label{thm:beta_distribution}
Suppose that the conditions of Assumption 1 are satisfied. Then $\widehat{\vect{\beta}}_{M}$ is a consistent estimator of $\vect{\beta}$, and  $\sqrt{n}(\widehat{\vect{\beta}}_{M}-\vect{\beta}) \to N(\vect{0}, \mat{I}_{M}^{-1})$, where $\mat{I}_{M}$ is the Godambe information matrix. The Godambe information matrix is equal to
\begin{align*}
    \mat{I}_{M} &= \dfrac{m(1+\alpha_{0})}{m+\alpha_{0}} \mat{I}_{LR},
\end{align*}
where $\mat{I}_{LR}$ is the Fisher information matrix for the logistic regression using the ground-truth labels, $\log L_{LR}(\vect{\beta}; \mathcal{Z} ,\mathcal{Y})$. 
\end{theorem}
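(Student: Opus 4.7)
The plan is to treat $\hat{\vect{\beta}}_M$ as a maximum quasi-likelihood estimator. The objective \eqref{eq:manual_conditional} is the logistic-regression log-likelihood one would use if $\vect{S}_j$ were multinomial, but under the true data-generating process \eqref{eq:count_dir_multinomial} it is a misspecified log-likelihood, so the appropriate tool is the theory of unbiased estimating equations and the sandwich variance that yields the Godambe (not Fisher) information. The proof will proceed in three steps: consistency from an unbiased score, a Taylor-expansion / CLT argument for asymptotic normality, and the algebra that collapses the sandwich to a scalar multiple of $\mat{I}_{LR}$.

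For consistency, let $\vect{x}_j=(1,\vect{y}_j^\T)^\T$ and $\pi_j(\vect{\beta})=\tau_1(\vect{y}_j;\vect{\beta})$; differentiating \eqref{eq:manual_conditional} yields the score
\begin{equation*}
U_n(\vect{\beta}) = \sum_{j=1}^n\bigl(s_{j1} - m\pi_j(\vect{\beta})\bigr)\vect{x}_j.
\end{equation*}
The Dirichlet-Multinomial shares its conditional mean with the multinomial, giving $E[S_{j1}\mid\vect{Y}_j] = m\pi_j(\vect{\beta})$ at the truth, so $E[U_n(\vect{\beta})] = \vect{0}$. Strict concavity of $\log L_{LR}$ in $\vect{\beta}$ gives uniqueness of the root, and identifiability follows from the positive definiteness of $\mat{I}_{LR}$, which in turn follows from the full-rank assumption $\text{cov}(\vect{Y}_j)=\mat{\Sigma}$. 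The standard M-estimator argument then delivers $\hat{\vect{\beta}}_M\to\vect{\beta}$ in probability.

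For asymptotic normality, I would Taylor expand $\vect{0}=U_n(\hat{\vect{\beta}}_M)$ about $\vect{\beta}$ to get $\sqrt{n}(\hat{\vect{\beta}}_M-\vect{\beta}) = (n^{-1}\mat{H}_n(\tilde{\vect{\beta}}))^{-1} n^{-1/2}U_n(\vect{\beta}) + o_p(1)$, where $\mat{H}_n=-\partial U_n/\partial\vect{\beta}^\T$. Two per-observation matrices are then computed. The sensitivity matrix is
\begin{equation*}
\mat{H} = m\cdot E[\pi_j(1-\pi_j)\vect{x}_j\vect{x}_j^\T] = m\,\mat{I}_{LR},
\end{equation*}
because the Hessian depends only on $\pi_j$ and not on the conditional distribution of $S_{j1}$. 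The variability matrix uses the Dirichlet-Multinomial variance $\text{Var}(S_{j1}\mid\vect{Y}_j) = m\pi_j(1-\pi_j)(m+\alpha_0)/(1+\alpha_0)$, so
\begin{equation*}
\mat{J} = E\bigl[(s_{j1}-m\pi_j)^2\vect{x}_j\vect{x}_j^\T\bigr] = \frac{m(m+\alpha_0)}{1+\alpha_0}\,\mat{I}_{LR}.
\end{equation*}
The Lindeberg CLT applies to the i.i.d.\ mean-zero summands of $n^{-1/2}U_n(\vect{\beta})$ under $E\|\vect{Y}_j\|^2<\infty$, and Slutsky yields $\sqrt{n}(\hat{\vect{\beta}}_M-\vect{\beta}) \to N(\vect{0},\mat{H}^{-1}\mat{J}\mat{H}^{-1})$. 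The Godambe information is therefore
\begin{equation*}
\mat{I}_M = \mat{H}\mat{J}^{-1}\mat{H} = m\,\mat{I}_{LR}\cdot\frac{1+\alpha_0}{m(m+\alpha_0)}\,\mat{I}_{LR}^{-1}\cdot m\,\mat{I}_{LR} = \frac{m(1+\alpha_0)}{m+\alpha_0}\,\mat{I}_{LR},
\end{equation*}
as claimed.

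The main obstacle is justifying consistency of a misspecified MLE: the usual appeal to Kullback-Leibler divergence is unavailable, and the argument hinges entirely on the first-moment agreement between the Dirichlet-Multinomial and the multinomial, which makes the estimating equation unbiased. The overdispersion then enters only through the variability matrix $\mat{J}$, and the remaining computations — the Dirichlet-Multinomial variance formula and the algebraic collapse of the sandwich — are routine.
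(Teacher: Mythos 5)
Your proposal is correct and takes essentially the same route as the paper: consistency via the unbiased score (the Dirichlet-Multinomial and the multinomial share the conditional mean $m\tau_{1}(\vect{y}_{j};\vect{\beta})$, so the pseudo-true parameter is $\vect{\beta}$ itself), then the sandwich computation $\mat{H}=m\mat{I}_{LR}$, $\mat{G}=\tfrac{m+\alpha_{0}}{1+\alpha_{0}}m\mat{I}_{LR}$, and $\mat{I}_{M}=\mat{H}\mat{G}^{-1}\mat{H}=\tfrac{m(1+\alpha_{0})}{m+\alpha_{0}}\mat{I}_{LR}$, identical to the paper's. The only difference is bookkeeping: where you sketch a direct concavity/Taylor-expansion/Lindeberg-CLT argument for the M-estimator asymptotics, the paper delegates those regularity conditions to Theorem 3 of Fahrmeir (1990) and Corollary 3 of Fahrmeir and Kaufmann (1985) on misspecified generalised linear models.
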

\begin{proof}
Under the imperfect supervision model \eqref{eq:count_dir_multinomial}, the logistic regression on the manual labels,  $\log L_{LR}(\vect{\beta}; \mathcal{S} , \mathcal{Y})$ is misspecified. We first show that $\widehat{\vect{\beta}}_{M}$ remains a consistent estimator. Under the Dirichlet-Multinomial model for the manual labels, $\mathbb{E}[\vect{S}_{j} \mid \vect{y}_{j}]=m\lbrace \vect{\tau}(\vect{y}_{j}; \vect{\beta}) \rbrace$ for all values of the overdispersion parameter $\alpha_{0}$. 
It follows that the expected value of the score statistic is still zero despite the model misspecification,
\begin{align*}
    \mathbb{E}_{\vect{Y}, \vect{S}}\left[\dfrac{\partial}{\partial \vect{\beta}} \log L_{LR}( \vect{\beta}; \mathcal{S} , \mathcal{Y})\right]  &= \mathbb{E}_{\vect{Y}}\left[ \mathbb{E}_{\vect{S} \mid \vect{Y}}\left\lbrace\sum_{j=1}^{n} \lbrace S_{j1} -m\tau_{1}(\vect{y}_{j}; \vect{\beta})\rbrace \vect{y}_{j} \mid \vect{Y}\right\rbrace \right] \\
    &= \mathbb{E}_{\vect{Y}}\left[ \sum_{j=1}^{n} \lbrace m\tau_{1}(\vect{y}_{j}; \vect{\beta}) - m\tau_{1}(\vect{y}_{j}; \vect{\beta})\rbrace  \vect{y}_{j}\right]  \\
        &= \mathbb{E}_{\vect{Y}}\left[ \sum_{j=1}^{n} \lbrace 0 \rbrace  \vect{y}_{j} \right]  \\
    &= \vect{0}.
\end{align*}
Under the conditions of Assumption 1, we can apply Theorem 3 in \cite{fahrmeir_1990_maximum} for the asymptotic behaviour of M-estimators in misspecified generalised linear models. It can be seen that the conditions of Assumption 1 are sufficient to meet the regularity conditions by appealing to Corollary 3 in \citet{fahrmeir_1985_consistency}. By Theorem 3 in \cite{fahrmeir_1990_maximum}, $\widehat{\vect{\beta}}_{M}$ is a consistent estimator of  $\vect{\beta}$ and symptomatically normally distributed.  The asymptotic distribution of $\widehat{\vect{\beta}}_{M}$ is given by $
    \sqrt{n}(\widehat{\vect{\beta}}_{M}- \vect{\beta}) \to N(\vect{0}, \mat{I}_{M}^{-1})$, 
where the Godambe information is given by $\mat{I}_{M} = \mat{H}^{-1}\mat{G}\mat{H}^{-1}$,  where
\begin{align*}
    \mat{H} &= -\mathbb{E}\left\lbrace \dfrac{\partial^2}{\partial \vect{\beta}\partial \vect{\beta}^{\T} } \log L_{LR}(\vect{\beta}; \mathcal{S} , \mathcal{Y}) \right\rbrace, \\
    \mat{G} &= \text{cov}\left( \dfrac{\partial}{\partial\vect{\beta}} \log L_{LR}(\vect{\beta}; \mathcal{S} , \mathcal{Y})\right),
\end{align*}
where the expectation and variance are taken over a single feature vector $\vect{Y}_{j}$ and vector of label counts $\vect{S}_{j}$. As $\log L_{LR}(\vect{\beta}; \mathcal{S} , \mathcal{Y})$ has the form of a logistic regression model, 
\begin{align}
    \mat{H} &= \mathbb{E}_{Y}\left[ m\tau_{1}(\vect{y}_{j}; \vect{\beta})\lbrace 1-\tau_{1}(\vect{y}_{j}; \vect{\beta})\rbrace\vect{y}_{j}\vect{y}_{j}^{\T} \right]=m\mat{I}_{LR}, \label{eq:mat_H}
\end{align}
where $\mat{I}_{LR}$ is the Fisher information for $\vect{\beta}$ in the log-likelihood $\log L_{LR}(\vect{\beta}; \mathcal{Z}, \mathcal{Y})$ that uses the ground-truth labels. Under the Dirichlet-Multinomial model the variance of the class count is
\begin{align*}
    \text{var}[S_{j1} \mid \vect{y}_{j}] &= m\tau_{1}(\vect{y}_{j}; \vect{\beta})\lbrace1-\tau_{1}(\vect{y}_{j}; \vect{\beta})\rbrace\dfrac{m+\alpha_{0}}{1+\alpha_{0}}.
\end{align*}
Using the law of total-variance, 
\begin{align}
     \mat{G}&= \text{cov}\left( \dfrac{\partial}{\partial\vect{\beta}} \log L_{LR}(\vect{\beta}; \mathcal{S} , \mathcal{Y})\right) \nonumber \\
     &=\mathbb{E}_{Y}\left[  \text{cov}\left( \dfrac{\partial}{\partial\vect{\beta}} \log L_{LR}(\vect{\beta}; \mathcal{S} , \mathcal{Y})\right) \mid \vect{Y}\right] + \text{cov}\left( \mathbb{E}_{S \mid Y}\left[ \dfrac{\partial}{\partial\vect{\beta}} \log L_{LR}(\vect{\beta}; \mathcal{S}, \mathcal{Y}) \mid \vect{Y} \right]\right) \nonumber \\
     &= \mathbb{E}_{Y}\left[ \text{var}(S_{j1}) \vect{y}_{j}\vect{y}_{j}^{\T}\right]  + \vect{0} \nonumber \\ 
    &=  \dfrac{m+\alpha_{0}}{1+\alpha_{0}}\mathbb{E}_{Y}\left[ m\tau_{1}(\vect{y}_{j}; \vect{\beta})\lbrace1-\tau_{1}(\vect{y}_{j}; \vect{\beta})\rbrace\vect{y}_{j}\vect{y}_{j}^{\T} \right]   \nonumber \\
    &= \dfrac{m+\alpha_{0}}{1+\alpha_{0}}m\mat{I}_{LR}. \label{eq:mat_G}
\end{align}
Combining \eqref{eq:mat_H} and \eqref{eq:mat_G}, the Godambe information $\mat{I}_{M}$ is given by
\begin{align*}
    \mat{I}_{M} &= \mat{H}\mat{G}^{-1}\mat{H} \\
    &= \dfrac{1+\alpha_{0}}{m+\alpha_{0}}m\mat{I}_{LR}\mat{I}_{LR}^{-1}\mat{I}_{LR} \\
    &= \dfrac{m(1+\alpha_{0})}{m+\alpha_{0}}\mat{I}_{LR}.
\end{align*}
\end{proof}

Under the Dirichlet-Multinomial model \eqref{eq:count_dir_multinomial}, logistic regression using the noisy labels $\mathcal{S}$ provides a consistent estimator of ${\vect{\beta}}$. This is not necessarily the case when label noise process is uniform within each class \citep{michalek_1980_effect, bi_2010_efficiency}. Theorem \ref{thm:beta_distribution} connects the Godambe information $\mat{I}_{M}$ from the set of noisy labels $\mathcal{S}$ to the Fisher information $\mat{I}_{LR}$ from the the ground-truth labels $\mathcal{Z}$. From this relationship we can derive the asymptotic relative efficiency of $\widehat{R}_{M}$ compared to $\widehat{R}_{G}$. 
\newpage 
\begin{theorem}
\label{thm:are_msc}
Suppose the conditions of Assumption 1 are satisfied. The asymptotic relative efficiency of $\widehat{R}_{M}$ compared to $\widehat{R}_{G}$ is
\begin{align*}
    \rm{ARE} &= \dfrac{m(1+\alpha_{0})}{m+\alpha_{0}}.
\end{align*}
\end{theorem}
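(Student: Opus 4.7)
The plan is to express each expected excess error rate as a quadratic form in the estimation error, to leading order in $1/n$, and then to read off the ARE directly from the two asymptotic covariance matrices that have already been computed. The two main ingredients in play are the asymptotic normality results $\sqrt{n}(\widehat{\vect{\beta}}_G-\vect{\beta})\to N(\vect{0},\mat{I}_{LR}^{-1})$ and, via Theorem \ref{thm:beta_distribution}, $\sqrt{n}(\widehat{\vect{\beta}}_M-\vect{\beta})\to N(\vect{0},\mat{I}_M^{-1})$ with $\mat{I}_M=\tfrac{m(1+\alpha_0)}{m+\alpha_0}\mat{I}_{LR}$.

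First, I would perform a second-order Taylor expansion of $\text{err}(\vect{\beta}^{*};\vect{\beta})$ about $\vect{\beta}^{*}=\vect{\beta}$. Under the logistic model \eqref{eq:lr}, the true parameter $\vect{\beta}$ is the Bayes-optimal point for the conditional error rate, so the gradient $\nabla_{\vect{\beta}^{*}} \text{err}(\vect{\beta}^{*};\vect{\beta})\big|_{\vect{\beta}^{*}=\vect{\beta}}$ vanishes. This yields
\begin{align*}
\text{err}(\vect{\beta}^{*};\vect{\beta})-\text{err}(\vect{\beta};\vect{\beta}) &= \tfrac{1}{2}(\vect{\beta}^{*}-\vect{\beta})^{\T}\mat{A}(\vect{\beta})(\vect{\beta}^{*}-\vect{\beta})+o(\|\vect{\beta}^{*}-\vect{\beta}\|^{2}),
\end{align*}
where $\mat{A}(\vect{\beta})$ is the Hessian of $\text{err}(\cdot;\vect{\beta})$ at $\vect{\beta}$, a fixed positive semi-definite matrix that does not depend on whether the ground-truth or manual labels were used for estimation.

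Next, I would substitute $\vect{\beta}^{*}=\widehat{\vect{\beta}}_{G}$ and $\vect{\beta}^{*}=\widehat{\vect{\beta}}_{M}$ respectively, take expectations, and use the asymptotic covariances together with a uniform integrability argument to identify the leading $1/n$ term:
\begin{align*}
\mathbb{E}\{\text{err}(\widehat{\vect{\beta}}_{G};\vect{\beta})\}-\text{err}(\vect{\beta};\vect{\beta}) &= \tfrac{1}{2n}\,\text{tr}\!\left\{\mat{A}(\vect{\beta})\mat{I}_{LR}^{-1}\right\}+o(1/n), \\
\mathbb{E}\{\text{err}(\widehat{\vect{\beta}}_{M};\vect{\beta})\}-\text{err}(\vect{\beta};\vect{\beta}) &= \tfrac{1}{2n}\,\text{tr}\!\left\{\mat{A}(\vect{\beta})\mat{I}_{M}^{-1}\right\}+o(1/n).
\end{align*}
Substituting $\mat{I}_{M}^{-1}=\tfrac{m+\alpha_{0}}{m(1+\alpha_{0})}\mat{I}_{LR}^{-1}$ makes the trace factor $\text{tr}\{\mat{A}(\vect{\beta})\mat{I}_{LR}^{-1}\}$ cancel in the ratio, leaving
\begin{align*}
\text{ARE} &= \lim_{n\to\infty}\frac{\tfrac{1}{2n}\text{tr}\{\mat{A}(\vect{\beta})\mat{I}_{LR}^{-1}\}+o(1/n)}{\tfrac{1}{2n}\tfrac{m+\alpha_{0}}{m(1+\alpha_{0})}\text{tr}\{\mat{A}(\vect{\beta})\mat{I}_{LR}^{-1}\}+o(1/n)} = \frac{m(1+\alpha_{0})}{m+\alpha_{0}}.
\end{align*}

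The step I expect to be the main obstacle is the rigorous passage from the asymptotic distribution of $\sqrt{n}(\widehat{\vect{\beta}}-\vect{\beta})$ to the asymptotic expectation of the quadratic form $(\widehat{\vect{\beta}}-\vect{\beta})^{\T}\mat{A}(\vect{\beta})(\widehat{\vect{\beta}}-\vect{\beta})$, which requires either uniform integrability of $n\|\widehat{\vect{\beta}}-\vect{\beta}\|^{2}$ or a direct higher-order expansion argument, together with smoothness of the error-rate functional. Once that is in place and the gradient of the conditional error rate is verified to vanish at the true $\vect{\beta}$ (using the Bayes-optimality of the logistic rule under the correctly specified model \eqref{eq:lr}), the algebra collapses to the stated ratio with nothing further to verify, since the ratio is independent of $\mat{A}(\vect{\beta})$ and of the distribution of $\vect{Y}$.
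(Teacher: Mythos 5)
Your proposal is correct and follows essentially the same route as the paper: both reduce the expected excess error to the leading-order trace form $\operatorname{trace}\{\mat{J}(\vect{\beta})\mat{I}^{-1}(\vect{\beta})\}$ (your $\tfrac{1}{2}\mat{A}(\vect{\beta})$ is exactly the paper's $\mat{J}(\vect{\beta})$) and then use $\mat{I}_{M}=\tfrac{m(1+\alpha_{0})}{m+\alpha_{0}}\mat{I}_{LR}$ from Theorem \ref{thm:beta_distribution} so the trace factor cancels in the ratio. The only difference is that you sketch the derivation of the trace expansion (vanishing gradient at the Bayes-optimal $\vect{\beta}$, quadratic Taylor term, uniform integrability), whereas the paper simply asserts that expansion as a known result.
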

\begin{proof}
The asymptotic excess error can be expanded as 
\begin{align*}
    \underset{n \to \infty}{\lim} \ n\left[  \mathbb{E}\lbrace\rm{err}(\widehat{\vect{\beta}}; \vect{\beta}) \rbrace -\rm{err}(\vect{\beta}; \vect{\beta})  \right] &=  {\rm{trace}} \lbrace\mat{J}(\vect{\beta})\mat{I}^{-1}(\vect{\beta})\rbrace,
\end{align*}
where $\mat{I}(\vect{\beta})$ is the information matrix for a single observation, and
\begin{align*}
\mat{J}(\vect{\beta}) &= \tfrac{1}{2}\left[\nabla \nabla^{\T} {\rm{err}}(\widehat{\vect{\beta}}; \vect{\beta}) \right]_{\widehat{\vect{\beta}}=\vect{\beta}}.   
\end{align*}
The ARE can then be expressed as
\begin{align*}
    \rm{ARE} &= \underset{n \to \infty}{\lim} \ \dfrac{  \mathbb{E}\lbrace{\rm{err}}(\widehat{\vect{\beta}}_{G}; \vect{\beta}) \rbrace -\rm{err}(\vect{\beta}; \vect{\beta})}{ \mathbb{E}\lbrace{\rm{err}}(\widehat{\vect{\beta}}_{M}; \vect{\beta}) \rbrace -\rm{err}(\vect{\beta}; \vect{\beta})}   \\
    &= \dfrac{{\rm{trace}}(\mat{J}(\vect{\beta})\mat{I}_{LR}^{-1}(\vect{\beta}))}{{\rm{trace}}(\mat{J}(\vect{\beta})\mat{I}_{M}^{-1}(\vect{\beta}))}.
\end{align*}
Using Theorem \ref{thm:beta_distribution} and substituting $\mat{I}_{M}=\tfrac{m(1+\alpha_{0})}{m+\alpha_{0}}\mat{I}_{LR}$, 
\begin{align*}
    \rm{ARE} &= \dfrac{{\rm{trace}}(\mat{J}(\vect{\beta})\mat{I}_{LR}^{-1}(\vect{\beta}))}{\tfrac{m+\alpha_{0}}{m(1+\alpha_{0})}{\rm{trace}}(\mat{J}(\vect{\beta})\mat{I}_{LR}^{-1}(\vect{\beta}))} \\
    &= \dfrac{m(1+\alpha_{0})}{m+\alpha_{0}}.
\end{align*}
\end{proof}

\begin{remark} 
The limiting value of the Godambe information as $m \to \infty$ is \[ \lim_{m\to \infty}\mat{I}_{M} = (1+\alpha_{0})\mat{I}_{LR}. \] The limiting value of the ARE as $m\to \infty$ is $(1+\alpha_{0})$. The parameter $\alpha_{0}$ gives an upper bound on efficiency gain that can be obtained given imperfect manual labels. Under the Dirichlet-Multinomial model, the information supplied by the $m$ sets of noisy labels is bounded below by $\mat{I}_{LR}$ as  \[\lim_{\alpha_{0} \to 0^{+}}\mat{I}_{M} = \mat{I}_{LR} . \] Consequently the ARE is bounded below by one. $\qedsymbol$
\end{remark}

Previous evidence for the benefits of incorporating multiple noisy labels in supervised learning has largely been empirical \citep{sheng_2008_get, natarajan_2013_learning, raykar_2009_supervised, yan_2010_modeling}, Theorems \ref{thm:beta_distribution} and \ref{thm:are_msc} give important supporting theory on the robustness and effectiveness of noisy manual labels.

\section{Simulation}
\label{sec:simulation}
We simulated data from a canonical form for two-class normal discrimination, where $\mat{\Sigma}=\mat{I}$ and $\vect{\mu}_{1}=(\Delta/2, 0, \ldots, 0)^{\T},\vect{\mu}_{2}=(-\Delta/2, 0, \ldots, 0)^{\T}$. The coefficient vector $\vect{\beta}=(\beta_{0}, \vect{\beta}_{1}^{\T})^{\T}$ is given by
\begin{align*}
    \beta_{0} &= -\dfrac{1}{2}(\vect{\mu}_{1}+\vect{\mu}_{2})^{\T}\mat{\Sigma}^{-1}(\vect{\mu}_{1}-\vect{\mu}_{2}) + \log (\pi_{1}/\pi_{2}), \\
    \vect{\beta} &= \mat{\Sigma}^{-1}(\vect{\mu}_{1}-\vect{\mu}_{2}).
\end{align*}

We performed 1000 simulations at different combinations of $m, \Delta$, and $\alpha_{0}$ to estimate the relative efficiency of $\widehat{R}_{M}$ compared to $\widehat{R}_{G}$ with $n=500, p=2$. The class prior probabilities were taken to be equal. In each replication we computed the exact conditional error rate of $\widehat{\vect{\beta}}_{G}$ and $\widehat{\vect{\beta}}_{M}$. Additionally, we computed the exact Bayes' error rate $\text{err}(\vect{\beta}; \vect{\beta})$ for each value of $\Delta$. Table \ref{tab:sim} reports the simulated relative efficiency of $\widehat{R}_{M}$ compared to $\widehat{R}_{C}$. Bootstrap standard errors \citep{efron_1986_bootstrap} are given in parentheses. The simulated relative efficiencies are close to the theoretical values for each combination of $m,  \Delta$, and $\alpha_{0}$. The simulated relative efficiencies are not sensitive to $\Delta$. For each combination of $m$ and $\alpha_{0}$, the differences in the results over $\Delta=1,2,3$ and $4$ are within the standard errors. 

\begin{table}[ht]
\centering
\caption{Simulated relative efficiency of $\widehat{R}_{M}$ compared to $\widehat{R}_{G}$ with $\pi_{1}=\pi_{2}$ for $n=500, p=2$. Standard errors are given in parentheses.}  
\label{tab:sim}
\vspace{0.2cm}
\begin{tabular}{@{}rrrrrrr@{}}
\toprule
 & $\alpha_{0}$ & ARE & $\Delta=1$ & $\Delta=2$ & $\Delta=3$ & $\Delta=4$ \\ \midrule
$m=5$ & 1 & 1.67 & 1.58 (0.07) & 1.82 (0.08) & 1.68 (0.08) & 1.74 (0.08) \\ 
   & 10 & 3.67 & 3.33 (0.15) & 3.74 (0.18) & 3.78 (0.19) & 3.51 (0.16) \\ 
   & 100 & 4.81 & 4.65 (0.21) & 4.79 (0.23) & 4.87 (0.23) & 5.09 (0.22) \\ 
   & 1000 & 4.98 & 4.57 (0.20) & 5.21 (0.24) & 5.03 (0.23) & 4.99 (0.21) \\ \\
  $m=10$ & 1 & 1.82 & 1.73 (0.08) & 1.97 (0.09) & 1.83 (0.09) & 1.89 (0.08) \\ 
   & 10 & 5.50 & 5.10 (0.24) & 5.64 (0.26) & 5.47 (0.25) & 5.35 (0.24) \\ 
   & 100 & 9.18 & 8.87 (0.40) & 9.18 (0.43) & 9.40 (0.44) & 9.69 (0.44) \\ 
   & 1000 & 9.91 & 9.08 (0.41) & 10.41 (0.48) & 9.86 (0.44) & 9.99 (0.43) \\ \\
  $m=50$ & 1 & 1.96 & 1.86 (0.08) & 2.12 (0.09) & 2.00 (0.10) & 2.03 (0.09) \\ 
   & 10 & 9.17 & 8.52 (0.38) & 9.38 (0.45) & 9.23 (0.45) & 8.84 (0.40) \\ 
   & 100 & 33.67 & 32.50 (1.48) & 34.68 (1.62) & 34.53 (1.61) & 36.64 (1.67) \\ 
   & 1000 & 47.67 & 43.28 (1.95) & 50.85 (2.31) & 48.85 (2.23) & 48.32 (2.22) \\ 
   \bottomrule
\end{tabular}
\end{table}

\section{Data application}
\label{sec:data}
\subsection{Gastroentology dataset}
The Gastroentology dataset contain information on $n=76$ patients \citep{mesejo_2016_computer} and is available from the UCI Machine learning repository \citep{dua_2017_uci}.  For each patient, a regular colonoscopic video showed a gastrointenstinal lesion. Seven clinicians reviewed the videos and made an assessment on whether the lesions were benign or malignant, and these classifications were taken to be the manual labels $\mathcal{Z}'$. The vote count totals $\mathcal{S}$ were then computed using $\mathcal{Z}'$. The dataset also contains the ground-truth classifications  $\mathcal{Z}$ obtained through histology and expert image inspection. The full dataset contains $698$ variables. After standardizing each variable to have zero mean and unit variance we selected a subset of $p=5$ variables using sparse discriminant analysis \citep{clemmensen_2011_sparse}. The $n=76$  observations on the variables $V113, V173, V475, V489$ and $V603$ were taken to be the features $\mathcal{Y}$.

Assuming the logistic regression model is correctly specified, the model for the manual labels
\begin{align}
\vect{S}_{j}  &\sim \text{Dirichlet-Multinomial}(m, \alpha_{0}\vect{\tau}(\vect{y}_{j}; {\vect{\beta}})) \quad (j=1, \ldots, n), \label{eq:count_dir_multinomial_gastro}
\end{align}
is a function of the true value of $\vect{\beta}$. As $\vect{\beta}$ is unknown, a suitable estimate $\widehat{\vect{\beta}}$ can be used to construct a working version of the manual label model,
\begin{align}
\vect{S}_{j}  &\sim \text{Dirichlet-Multinomial}(m, \alpha_{0}\vect{\tau}(\vect{y}_{j}; \widehat{\vect{\beta}})) \quad (j=1, \ldots, n), \label{eq:working_multinomial_gastro}
\end{align}
From Theorem \ref{thm:beta_distribution}, $\widehat{\vect{\beta}}_{M}$ is a consistent estimator of $\vect{\beta}$. As the ground-truth labels are also available we can also use $\widehat{\vect{\beta}}_{G}$ as an estimate of $\vect{\beta}$. In many situations the ground-truth labels will not be available, but they are useful here to check for the consistency of results. To estimate $\alpha_{0}$ we propose to maximise the likelihood of the model \eqref{eq:working_multinomial_gastro} with $\widehat{\vect{\beta}}$ fixed at a suitable estimate. Using $\widehat{\vect{\beta}}=\widehat{\vect{\beta}}_{G}$ we obtained the estimate $\widehat{\alpha}_{0}=5.47$,  and using $\widehat{\vect{\beta}}=\widehat{\vect{\beta}}_{M}$ we obtained the estimate $\widehat{\alpha}_{0}=5.17$.

We now discuss some methods for assessing the suitability of the manual labelling model. A key property of the Dirichlet-Multinomial model is that it posits a relationship between the level of group agreement and classification difficulty.  Given the posterior probabilities of class membership  $\vect{\tau}(\vect{y}_{j}; {\vect{\beta}})$ for $j=1, \ldots, n$, observations can be divided into groups according to the number of votes that were received for the positive class. A goodness of fit measure is then to compute within each vote group the average posterior class-probability
\begin{align*}
    \overline{\vect{\tau}}_{v}(\vect{\beta})&= {n_{v}}^{-1}\sum_{j \in \mathcal{N}_{v}} \vect{\tau}(\vect{y}_{j}; {\vect{\beta}}),
\end{align*} 
where the set $\mathcal{N}_{v}$ contains the indices of the $n_{v}$ observations which received $v$ positive votes $(v=0, \ldots, 7)$. A straightforward goodness of fit measure is to plot the observed vote counts against the expected vote counts  $\mathbb{E}[\vect{S}_{j} \mid \vect{y}_{j}; \vect{\beta}] = m\vect{\tau}(\vect{y}_{j}; {\vect{\beta}})$.  Figure \ref{fig:gastro_group_model} plots goodness of fit diagnostics for the working version of the manual label model \eqref{eq:working_multinomial_gastro}. The top row shows results using the model based on the estimate $\widehat{\vect{\beta}}=\widehat{\vect{\beta}}_{G}$ and the bottom row shows the results using the model based on the estimate $\widehat{\vect{\beta}}=\widehat{\vect{\beta}}_{M}$.

The first column in Figure \ref{fig:gastro_group_model} plots average posterior probabilities against the number of positive votes $v$, the error bars represent plus or minus one standard error of the mean. The top row displays  $\overline{\vect{\tau}}_{v}(\widehat{\vect{\beta}}_{G})$, and the bottom row displays $\overline{\vect{\tau}}_{v}(\widehat{\vect{\beta}}_{M})$. As expected, the average posterior class-probability is associated with the level of group agreement. In panel 1(a), for the instances where there was unanimous agreement, so $v=0$ or $v=7$,  the average posterior class-probability is close to zero or one respectively. There appears to be a positive relationship between the number of votes and the average posterior class-probability.

 Panels (1b) and (2b) in Figure \ref{fig:gastro_group_model}  compare the observed vote counts to the expected vote counts. The top row shows $\mathbb{E}[\vect{S}_{j} \mid \vect{y}_{j}; \ \widehat{\vect{\beta}}_{G}] = m\vect{\tau}(\vect{y}_{j}; \ \widehat{\vect{\beta}}_{G})$ and the bottom row shows $\mathbb{E}[\vect{S}_{j} \mid \vect{y}_{j}; \ \widehat{\vect{\beta}}_{M}] = m\vect{\tau}(\vect{y}_{j}; \ \widehat{\vect{\beta}}_{M})$ for $j=1, \ldots, n$. The red line gives the theoretical mean, and the predicted trend in the vote counts is consistent with the observed votes $\mathcal{S}$.  The predicted model for $\vect{S}_{j}$ in (1b) was computed on the basis of the ground-truth labels $\mathcal{Z}$. There is less deviation between the observed and expected counts for $\vect{S}_{j}$ in (2b) compared to (1b) as $\widehat{\vect{\beta}}_{M}$ was itself computed using the manual labels $\mathcal{S}$.

 The dashed lines in Figure \ref{fig:gastro_group_model} (1b) and (2b) show a conservative 95\% prediction interval for the counts using the estimated parameters. For the estimated model using the ground-truth labels (1b), the majority of the data-points are within the prediction bands, with the exception of some points with a large posterior class-probability. The estimated model using the manual-labels in (2b) has more observations falling within the prediction bands. There are no outliers with large posterior class-probabilities.

 Figure \ref{fig:gastro_group_model} (1c) and (2c) show histograms of 500 bootstrap estimates of $\alpha_{0}$. Using $\widehat{\vect{\beta}}=\widehat{\vect{\beta}}_{G}$ the 95\% percentile bootstrap confidence interval is $(3.32, 11.66)$. Using $\widehat{\vect{\beta}}=\widehat{\vect{\beta}}_{M}$ the 95\% percentile bootstrap confidence interval is $(2.94, 10.43)$. The random effects model for the manual labels \eqref{eq:count_dir_multinomial_gastro} appears to be more appropriate compared to the idealised Multinomial model $\vect{S}_{j}  \sim \text{Multinomial}(m,\vect{\tau}(\vect{y}_{j}; \vect{\beta}))$ as there is little evidence in favour of a boundary estimate $\alpha_{0}\to \infty$.

\begin{figure}
    \centering
    \includegraphics[width=0.85\textwidth]{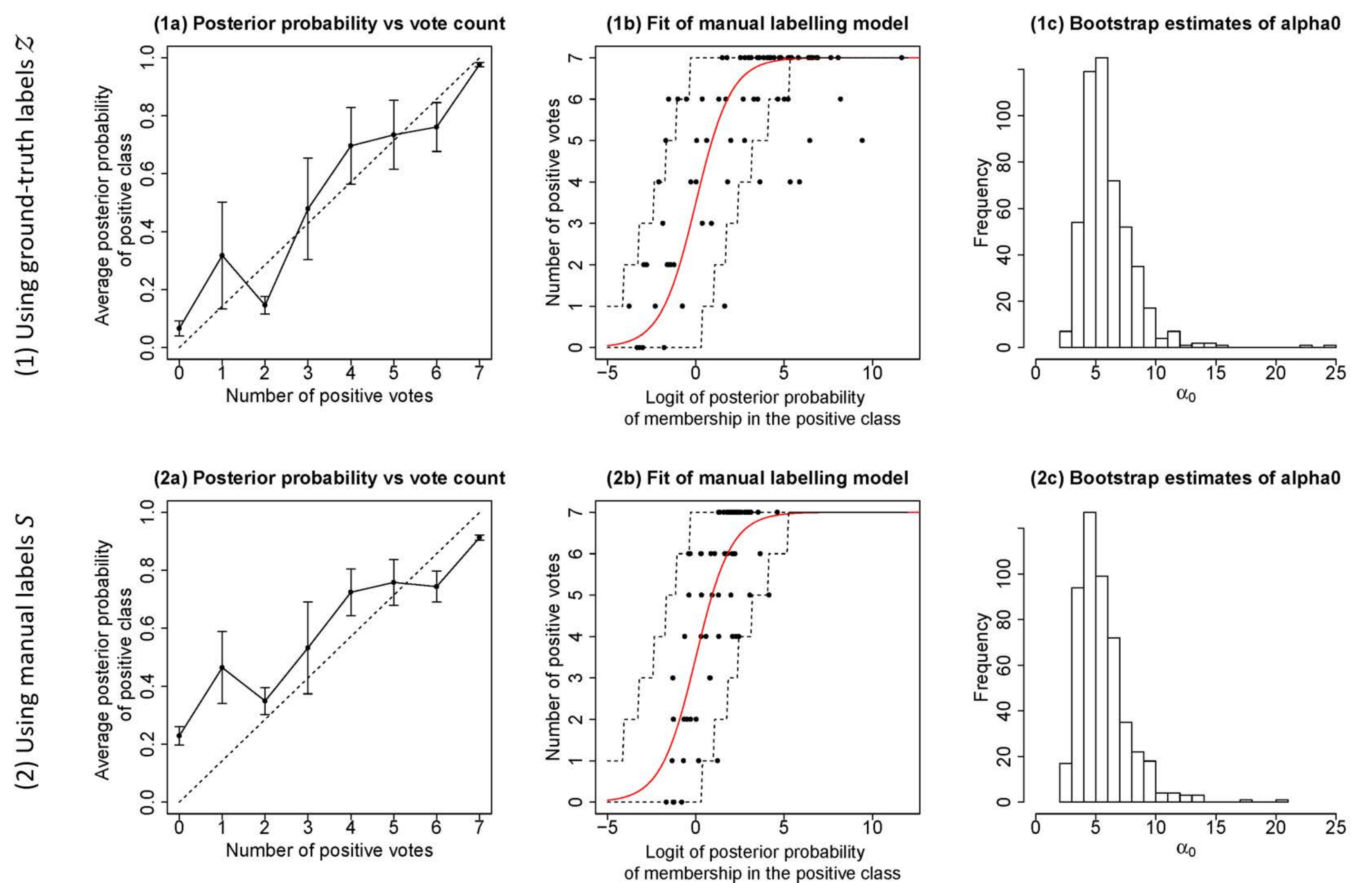}
    \caption{Assessment of the Dirichlet-Multinomial model for manual labelling \eqref{eq:working_multinomial_gastro} on the gastroentology dataset. The top row shows results for the model using the estimate $\widehat{\vect{\beta}}=\widehat{\vect{\beta}}_{G}$ computed using the ground-truth labels $\mathcal{Z}$. The bottom row show results for the model using the estimate $\widehat{\vect{\beta}} =\widehat{\vect{\beta}}_{M}$ computed using the manual labels $\mathcal{S}$. Column (a): Average estimated posterior probability of positive class membership against number of positive votes. Column (b): Comparison of observed and expected values for the number of positive votes. Column (c): Bootstrap estimates of the overdispersion parameter $\alpha_{0}$ in the Dirichlet-Multinomial model. }
    \label{fig:gastro_group_model}
\end{figure}
\subsection{Wisconsin breast cancer dataset}
We performed some experiments on the Wisconsin breast cancer dataset $(N=569, \  p=10)$ using simulated noisy labels. The dataset is available from the UCI machine learning repository \citep{dua_2017_uci}.  We split the original dataset of $N$ observations into a training set of $n$ observations and a test set of $N-n$ observations. The maximum likelihood estimate of $\vect{\beta}$ using the full dataset of $N$ observations was taken to be the ground-truth parameter value. The $N$ labels in the dataset were also assumed to be the ground-truth labels. Manual labels were then simulated according to the Dirichlet-Multinomial model for different values of the group size $m$ and overdispersion parameter $\alpha_{0}$. We computed $\widehat{\vect{\beta}}_{G}$ and $\widehat{\vect{\beta}}_{M}$ using the training set of $n$ observations and then computed the error rate on the test set of $N-n$ observations.  This was repeated for 100 random test-train splits. Table \ref{tab:breast_error} reports the average test set error of $\widehat{R}_{M}$ for each combination of $\alpha_{0}$ and $m$. The value of the overdispersion parameter $\alpha_{0}$ has a strong impact on the error rate. For $m=100$, the average error is $10.73\%$ for $\alpha_{0}=1$, and is reduced to $5.54\%$ for $\alpha_{0}=1000$. The average test set error of $\widehat{R}_{G}$ was 13.29\% with standard error 0.37\%. For all combinations of $m$ and $\alpha_{0}$ the average test error using $\widehat{R}_{M}$ was lower than $\widehat{R}_{G}$, which is consistent with Theorem \ref{thm:are_msc}.

The value of $m$ has a larger impact on the error rates as $\alpha_{0}$ increases. As mentioned in Remark 1, the value of $\alpha_{0}$ gives an upper bound on the information that can be obtained as the number of labellers $m$ increases,
\begin{align*}
    \lim_{m\to \infty}\mat{I}_{M} = (1+\alpha_{0})\mat{I}_{LR}.
\end{align*}
For $\alpha_{0}=1$, $\mat{I}_{M}$ is approaching $2\mat{I}_{LR}$ as $m$ increases. There is a small reduction in error rates comparing $m=5$ and $m=100$ as there is a strong limit on the amount of information that can be obtained by group size $m$. For $\alpha_{0}=1000$,  $\mat{I}_{M}$ is approaching $1001\mat{I}_{LR}$ as $m$ increases, and benefit of additional labellers is much greater. The error rate approximately halves, dropping from 10.06\% at $m=5$ to 5.54\% at $m=100$.

\begin{table}[ht]
\centering
\caption{Simulation results using the breast cancer dataset. Average test set error percentage of $\widehat{R}_{M}$ over 100 test-train splits using $n=50$ training samples. Standard errors of the mean are given in parentheses. The average test set error of $\widehat{R}_{G}$ was 13.28\%}
\label{tab:breast_error}
\vspace{0.2cm}
\begin{tabular}{@{}rrrrrr@{}}
  \toprule
$\alpha_{0}$ & $m=5$ & $m=10$ & $m=20$ & $m=50$ & $m=100$ \\ 
  \midrule
1 & 11.63 (0.34) & 11.21 (0.34) & 11.01 (0.34) & 10.45 (0.31) & 10.73 (0.36) \\ 
  10 &  9.78 (0.33) &  8.66 (0.30) &  7.83 (0.26) &  7.14 (0.19) &  6.99 (0.19) \\ 
  100 &  9.45 (0.27) &  7.92 (0.23) &  6.78 (0.15) &  6.06 (0.11) &  5.85 (0.08) \\ 
  1000 & 10.06 (0.34) &  8.00 (0.30) &  6.64 (0.18) &  5.87 (0.10) &  5.54 (0.05) \\ 
   \bottomrule
\end{tabular}
\end{table}

Table \ref{tab:breast_re} reports the simulated relative efficiency of  $\widehat{R}_{M}$ compared to $\widehat{R}_{G}$.  The Bayes' error rate was taken to be the apparent error rate when using the logistic regression model trained on the complete set of $N$ observations.  The theoretical relative efficiency of $\widehat{R}_{M}$ compared to $\widehat{R}_{G}$ using Theorem \ref{thm:are_msc} is given in parentheses in Table \ref{tab:breast_re}. The simulated relative efficiencies are smaller than the corresponding theoretical values. This may be due to underestimation of the Bayes' error rate, and the fact that the true conditional error rates may not be estimated perfectly with the limited test set of $N-n$ observations. In the simulations in Section \ref{sec:simulation}, it was possible to compute the exact conditional error rate and the true Bayes' error rate, and this could explain the better agreement between the simulated relative efficiency and the asymptotic relative efficiency there. Looking at the theoretical error rates, it is again possible to see the stronger influence of $m$ for larger values of $\alpha_{0}$. For $\alpha_{0}=1,\ m=100$ the ARE is 1.98, which is close to the limiting value of 2 as $m \to \infty$. For $\alpha_{0}=1000,\ m=100$ the ARE is 91.00 which is much closer to the nominal number of labellers $m=100$. 

\begin{table}[ht]
\centering
\caption{Simulated relative efficiency of $\widehat{R}_{M}$ compared to $\widehat{R}_{G}$ on the breast cancer dataset. The theoretical ARE of $\widehat{R}_{M}$ to $\widehat{R}_{G}$ as given by Theorem \ref{thm:are_msc} is given in parentheses.}
\label{tab:breast_re}
\vspace{0.2cm}
\begin{tabular}{@{}rrrrrr@{}}
  \toprule
$\alpha_{0}$ & $m=5$ & $m=10$ & $m=20$ & $m=50$ & $m=100$ \\ 
  \midrule
1 &  1.25 (1.67) &  1.34 (1.82) &  1.38 \  (1.90) &  1.53 \  (1.96) &  1.45 \ (1.98) \\ 
  10 &  1.75 (3.67) &  2.30 (5.50) &  2.99 \ (7.33) &  4.01 \  (9.17) &  4.33 (10.00) \\ 
  100 &  1.88 (4.81) &  2.90 (9.18) &  4.86 (16.83) &  8.52 (33.67) & 10.82 (50.50) \\ 
  1000 &  1.65 (4.98) &  2.82 (9.91) &  5.32 (19.63) & 10.66 (47.67) & 18.65 (91.00) \\ 
   \bottomrule
\end{tabular}
\end{table}

\section{Conclusion}
\label{sec:conclusion}
Label noise is a practical consideration in supervised learning that can have interesting statistical effects \citep{mclachlan_1972_asymptotic,frenay_2014_classification, cannings_2020_classification, song_2020_convex}. A consequence of Bayes' theorem is that noisy labels sampled according to the posterior probabilities of class membership supply as much information as the ground-truth labels. Based on this observation we proposed a random effects model for manual labelling that takes into account that the experts do not have perfect knowledge of the posterior probabilities of class membership. Under the proposed model, the sandwich information matrix associated with the use of the noisy manual labels is a multiple of the Fisher information matrix associated with the use of the ground-truth labels. We derived the asymptotic relative efficiency of logistic regression using the noisy manual labels compared to logistic regression using the ground-truth labels, and found that the classifier trained using multiple sets of noisy manual labels can outperform the classifier trained using the single set of ground-truth labels. 

A limitation of the Dirichlet-Multinomial model is that the information in the noisy labels $\mat{I}_{M}$ is bounded below by $\mat{I}_{LR}$. The model is not flexible enough to allow for the $m$ sets of noisy labels to be less informative than the single set of ground-truth labels. The Dirichlet distribution is conjugate to the Multinomial, and was viewed as a natural choice for the random effect to capture imperfect labelling that is mathematically tractable. Extensions to this work could develop richer models that allow for $\mat{I}_{M}$ to be less than $\mat{I}_{LR}$ with very noisy labels. There is empirical evidence that multiple noisy labels can be less informative than the single set of ground-truth labels \citep{sheng_2008_get}.

A second area where improvements could be made is allowing for correlated labelling errors. In the present model, the manual labels are drawn independently for each instance in the training set. It is plausible that labelling errors will be correlated for similar instances in the training set. Correlated labelling errors could be modelled by allowing for correlated random effects in the Dirichlet-Multinomial model. Copulas could be used to introduce the correlation. However, we expect that derivations of the Godambe information and the asymptotic relative efficiency as in Section \ref{sec:relative_efficiency} will become more challenging to calculate. 

Modelling manual annotation by a group of experts is useful as it can allow for more information to be extracted than by simply taking the majority vote \citep{raykar_2009_supervised, yan_2010_modeling, song_2020_convex}. When group disagreement is related to classification difficulty, it is possible to use this association to increase the accuracy of a classifier. 

\section*{Acknowledgements}
This research was funded by the Australian Government through the
Australian Research Council (Project Numbers DP170100907 and IC170100035).

\bibliographystyle{rss}
\bibliography{bibliography}

\begin{thebibliography}{39}
\expandafter\ifx\csname natexlab\endcsname\relax\def\natexlab#1{#1}\fi
\expandafter\ifx\csname url\endcsname\relax
  \def\url#1{\texttt{#1}}\fi
\expandafter\ifx\csname urlprefix\endcsname\relax\def\urlprefix{URL}\fi

\bibitem[{Bi and Jeske(2010)}]{bi_2010_efficiency}
Bi, Y. and Jeske, D.~R. (2010) The efficiency of logistic regression compared
  to normal discriminant analysis under class-conditional classification noise.
\newblock \textit{Journal of Multivariate Analysis}, \textbf{101}, 1622--1637.

\bibitem[{Blanchard et~al.(2016)Blanchard, Flaska, Handy, Pozzi and
  Scott}]{blanchard_2016_classification}
Blanchard, G., Flaska, M., Handy, G., Pozzi, S. and Scott, C. (2016)
  Classification with asymmetric label noise: Consistency and maximal
  denoising.
\newblock \textit{Electronic Journal of Statistics}, \textbf{10}, 2780--2824.

\bibitem[{Bouveyron et~al.(2019)Bouveyron, Celeux, Murphy and
  Raftery}]{bouveyron_2019_model}
Bouveyron, C., Celeux, G., Murphy, T. and Raftery, A. (2019)
  \textit{Model-Based Clustering and Classification for Data Science: With
  Applications in R}.
\newblock Cambridge University Press.

\bibitem[{Bouveyron and Girard(2009)}]{bouveyron_2009_robust}
Bouveyron, C. and Girard, S. (2009) Robust supervised classification with
  mixture models: Learning from data with uncertain labels.
\newblock \textit{Pattern Recognition}, \textbf{42}, 2649--2658.

\bibitem[{Cannings et~al.(2020)Cannings, Fan and
  Samworth}]{cannings_2020_classification}
Cannings, T.~I., Fan, Y. and Samworth, R.~J. (2020) Classification with
  imperfect training labels.
\newblock \textit{Biometrika}, \textbf{107}, 311--330.

\bibitem[{Cappozzo et~al.(2019)Cappozzo, Greselin and
  Murphy}]{cappozzo_2019_robust}
Cappozzo, A., Greselin, F. and Murphy, T.~B. (2019) A robust approach to
  model-based classification based on trimming and constraints.
\newblock \textit{Advances in Data Analysis and Classification}, \textbf{14},
  327--354.

\bibitem[{Cheng and Liu(2001)}]{cheng_2001_consistency}
Cheng, R. C.~H. and Liu, W.~B. (2001) The consistency of estimators in finite
  mixture models.
\newblock \textit{Scandinavian Journal of Statistics}, \textbf{28}, 603--616.

\bibitem[{Clemmensen et~al.(2011)Clemmensen, Hastie, Witten and
  Ersb{\o}ll}]{clemmensen_2011_sparse}
Clemmensen, L., Hastie, T., Witten, D. and Ersb{\o}ll, B. (2011) Sparse
  discriminant analysis.
\newblock \textit{Technometrics}, \textbf{53}, 406--413.

\bibitem[{Dawid and Skene(1979)}]{dawid_1979_maximum}
Dawid, A.~P. and Skene, A.~M. (1979) Maximum likelihood estimation of observer
  error-rates using the {E}{M} algorithm.
\newblock \textit{Journal of the Royal Statistical Society: Series C (Applied
  Statistics)}, \textbf{28}, 20--28.

\bibitem[{Donmez et~al.(2010)Donmez, Carbonell and
  Schneider}]{donmez_2010_probabilistic}
Donmez, P., Carbonell, J. and Schneider, J. (2010) A probabilistic framework to
  learn from multiple annotators with time-varying accuracy.
\newblock In \textit{Proceedings of the 2010 SIAM International Conference on
  Data Mining}, 826--837. SIAM.

\bibitem[{Dua and Graff(2017)}]{dua_2017_uci}
Dua, D. and Graff, C. (2017) {UCI} machine learning repository. {University of
  California, Irvine, School of Information and Computer Sciences}.

\bibitem[{Efron(1975)}]{efron_1975_efficiency}
Efron, B. (1975) The efficiency of logistic regression compared to normal
  discriminant analysis.
\newblock \textit{Journal of the American Statistical Association},
  \textbf{70}, 892--898.

\bibitem[{Efron and Tibshirani(1986)}]{efron_1986_bootstrap}
Efron, B. and Tibshirani, R. (1986) Bootstrap methods for standard errors,
  confidence intervals, and other measures of statistical accuracy.
\newblock \textit{Statistical Science}, \textbf{1}, 54--75.

\bibitem[{Fahrmeir(1990)}]{fahrmeir_1990_maximum}
Fahrmeir, L. (1990) Maximum likelihood estimation in misspecified generalized
  linear models.
\newblock \textit{Statistics}, \textbf{21}, 487--502.

\bibitem[{Fahrmeir and Kaufmann(1985)}]{fahrmeir_1985_consistency}
Fahrmeir, L. and Kaufmann, H. (1985) Consistency and asymptotic normality of
  the maximum likelihood estimator in generalized linear models.
\newblock \textit{The Annals of Statistics}, \textbf{13}, 342--368.

\bibitem[{Fraley and Raftery(2002)}]{fraley_2002_model}
Fraley, C. and Raftery, A.~E. (2002) Model-based clustering, discriminant
  analysis, and density estimation.
\newblock \textit{Journal of the American Statistical Association},
  \textbf{97}, 611--631.

\bibitem[{Fr{\'e}nay and Verleysen(2014)}]{frenay_2014_classification}
Fr{\'e}nay, B. and Verleysen, M. (2014) Classification in the presence of label
  noise: a survey.
\newblock \textit{{I}{E}{E}{E} Transactions on Neural Networks and Learning
  Systems}, \textbf{25}, 845--869.

\bibitem[{Hovy et~al.(2013)Hovy, Berg-Kirkpatrick, Vaswani and
  Hovy}]{hovy_2013_learning}
Hovy, D., Berg-Kirkpatrick, T., Vaswani, A. and Hovy, E. (2013) Learning whom
  to trust with {MACE}.
\newblock In \textit{Proceedings of the 2013 Conference of the North {A}merican
  Chapter of the Association for Computational Linguistics: Human Language
  Technologies}, 1120--1130. Association for Computational Linguistics.

\bibitem[{Ipeirotis et~al.(2010)Ipeirotis, Provost and
  Wang}]{ipeirotis_2010_quality}
Ipeirotis, P.~G., Provost, F. and Wang, J. (2010) Quality management on
  {A}mazon {M}echanical {T}urk.
\newblock In \textit{Proceedings of the ACM SIGKDD Workshop on Human
  Computation}, 64--67.

\bibitem[{Jin and Ghahramani(2003)}]{jin_2003_learning}
Jin, R. and Ghahramani, Z. (2003) Learning with multiple labels.
\newblock In \textit{Advances in Neural Information Processing Systems},
  921--928.

\bibitem[{Johnson et~al.(1997)Johnson, Kotz and
  Balakrishnan}]{johnson_1997_discrete}
Johnson, N.~L., Kotz, S. and Balakrishnan, N. (1997) \textit{Discrete
  multivariate distributions}.
\newblock New York: Wiley.

\bibitem[{McLachlan(1972)}]{mclachlan_1972_asymptotic}
McLachlan, G.~J. (1972) Asymptotic results for discriminant analysis when the
  initial samples are misclassified.
\newblock \textit{Technometrics}, \textbf{14}, 415--422.

\bibitem[{McLachlan and Peel(2000)}]{mclachlan_2000_finite}
McLachlan, G.~J. and Peel, D. (2000) \textit{Finite Mixture Models}.
\newblock Wiley.

\bibitem[{Mesejo et~al.(2016)Mesejo, Pizarro, Abergel, Rouquette, Beorchia,
  Poincloux and Bartoli}]{mesejo_2016_computer}
Mesejo, P., Pizarro, D., Abergel, A., Rouquette, O., Beorchia, S., Poincloux,
  L. and Bartoli, A. (2016) Computer-aided classification of gastrointestinal
  lesions in regular colonoscopy.
\newblock \textit{IEEE Transactions on Medical Imaging}, \textbf{35},
  2051--2063.

\bibitem[{Michalek and Tripathi(1980)}]{michalek_1980_effect}
Michalek, J.~E. and Tripathi, R.~C. (1980) The effect of errors in diagnosis
  and measurement on the estimation of the probability of an event.
\newblock \textit{Journal of the American Statistical Association},
  \textbf{75}, 713--721.

\bibitem[{Natarajan et~al.(2013)Natarajan, Dhillon, Ravikumar and
  Tewari}]{natarajan_2013_learning}
Natarajan, N., Dhillon, I.~S., Ravikumar, P.~K. and Tewari, A. (2013) Learning
  with noisy labels.
\newblock In \textit{Advances in Neural Information Processing Systems},
  1196--1204.

\bibitem[{Ng and Jordan(2002)}]{ng_2002_discriminative}
Ng, A.~Y. and Jordan, M.~I. (2002) On discriminative vs. generative
  classifiers: A comparison of logistic regression and naive {B}ayes.
\newblock In \textit{Advances in Neural Information Processing Systems},
  841--848.

\bibitem[{Quost et~al.(2017)Quost, Den{\oe}ux and Li}]{quost_2017_parametric}
Quost, B., Den{\oe}ux, T. and Li, S. (2017) Parametric classification with soft
  labels using the evidential {E}{M} algorithm: linear discriminant analysis
  versus logistic regression.
\newblock \textit{Advances in Data Analysis and Classification}, \textbf{11},
  659--690.

\bibitem[{Raykar and Yu(2012)}]{raykar_2012_eliminating}
Raykar, V.~C. and Yu, S. (2012) Eliminating spammers and ranking annotators for
  crowdsourced labeling tasks.
\newblock \textit{The Journal of Machine Learning Research}, \textbf{13},
  491--518.

\bibitem[{Raykar et~al.(2009)Raykar, Yu, Zhao, Jerebko, Florin, Valadez, Bogoni
  and Moy}]{raykar_2009_supervised}
Raykar, V.~C., Yu, S., Zhao, L.~H., Jerebko, A., Florin, C., Valadez, G.~H.,
  Bogoni, L. and Moy, L. (2009) Supervised learning from multiple experts: whom
  to trust when everyone lies a bit.
\newblock In \textit{Proceedings of the 26th Annual International Conference on
  Machine Learning}, 889--896.

\bibitem[{Rebbapragada and Brodley(2007)}]{rebbapragada_2007_class}
Rebbapragada, U. and Brodley, C.~E. (2007) Class noise mitigation through
  instance weighting.
\newblock In \textit{European Conference on Machine Learning}, 708--715.
  Springer.

\bibitem[{Sheng et~al.(2008)Sheng, Provost and Ipeirotis}]{sheng_2008_get}
Sheng, V.~S., Provost, F. and Ipeirotis, P.~G. (2008) Get another label?
  improving data quality and data mining using multiple, noisy labelers.
\newblock In \textit{Proceedings of the 14th ACM SIGKDD International
  Conference on Knowledge Discovery and Data Mining}, 614--622.

\bibitem[{Smyth et~al.(1995)Smyth, Fayyad, Burl, Perona and
  Baldi}]{smyth_1995_inferring}
Smyth, P., Fayyad, U.~M., Burl, M.~C., Perona, P. and Baldi, P. (1995)
  Inferring ground truth from subjective labelling of venus images.
\newblock In \textit{Advances in Neural Information Processing Systems},
  1085--1092.

\bibitem[{Song et~al.(2020)Song, Dai, Raskutti and Barber}]{song_2020_convex}
Song, H., Dai, R., Raskutti, G. and Barber, R.~F. (2020) Convex and non-convex
  approaches for statistical inference with class-conditional noisy labels.
\newblock \textit{Journal of Machine Learning Research}, \textbf{21}, 1--58.

\bibitem[{Vranckx et~al.(2021)Vranckx, Raymaekers, Ketelaere, Rousseeuw and
  Hubert}]{vranckx_2021_real}
Vranckx, I., Raymaekers, J., Ketelaere, B.~D., Rousseeuw, P.~J. and Hubert, M.
  (2021) Real-time discriminant analysis in the presence of label and
  measurement noise.
\newblock \textit{Chemometrics and Intelligent Laboratory Systems},
  \textbf{208}, 104197.

\bibitem[{Welinder et~al.(2010)Welinder, Branson, Perona and
  Belongie}]{welinder_2010_multidimensional}
Welinder, P., Branson, S., Perona, P. and Belongie, S.~J. (2010) The
  multidimensional wisdom of crowds.
\newblock In \textit{Advances in Neural Information Processing Systems},
  2424--2432.

\bibitem[{Xu et~al.(2006)Xu, Crammer and Schuurmans}]{xu_2006_robust}
Xu, L., Crammer, K. and Schuurmans, D. (2006) Robust support vector machine
  training via convex outlier ablation.
\newblock In \textit{Proceedings of the 21st National Conference on Artificial
  Intelligence}, AAAI'06, 536–542. AAAI Press.

\bibitem[{Yan et~al.(2010)Yan, Rosales, Fung, Schmidt, Hermosillo, Bogoni, Moy
  and Dy}]{yan_2010_modeling}
Yan, Y., Rosales, R., Fung, G., Schmidt, M., Hermosillo, G., Bogoni, L., Moy,
  L. and Dy, J. (2010) Modeling annotator expertise: Learning when everybody
  knows a bit of something.
\newblock In \textit{Proceedings of the Thirteenth International Conference on
  Artificial Intelligence and Statistics}, 932--939.

\bibitem[{Zhang et~al.(2013)Zhang, Cao and Obradovic}]{zhang_2013_learning}
Zhang, P., Cao, W. and Obradovic, Z. (2013) Learning by aggregating experts and
  filtering novices: a solution to crowdsourcing problems in bioinformatics.
\newblock \textit{BMC Bioinformatics}, \textbf{14}, 1--8.

\end{thebibliography}
\end{document}